\newacro{GenCo}[GenCo]{generating companies}
\newacro{DR}[DR]{demand response}
\newacro{MAB}[MAB]{multi-armed bandit}
\newacro{PowerTAC}[PowerTAC]{Power Trading Agent Competition}
\newacro{DC}[DC]{distribution companies}
\newacro{RP}[RP]{reduction probability}
\newacro{RR}[RR]{reduction rate}
\renewcommand{\paragraph}[1]{\noindent\textbf{#1}}
\newcommand{\ourmodel}{\textsc{ExpResponse}}
\newcommand{\ouralgo}{\textsc{MJS--ExpResponse}}
\newcommand{\ourucb}{\textsc{MJSUCB--ExpResponse}}
\newcommand{\wouralgo}{\textsc{WeightedMJS--ExpResponse}}
\newtheorem{theorem}{Theorem}
\newtheorem{lemma}{Lemma}
\title{A Novel Demand Response Model and Method for Peak Reduction in Smart Grids -- PowerTAC}
\author{ {\hspace{1mm}Sanjay Chandlekar} \\
	International Institute of Information \\
        Technology (IIIT), Hyderabad, India \\
	\texttt{sanjay.chandlekar@research.iiit.ac.in} \\
	\And
	{\hspace{1mm}Arthik Boroju} \\
	Indian Institute of Technology,\\ 
        Ropar, India \\
	\texttt{arthikvishwakarma@gmail.com} \\
	\And
        {\hspace{1mm}Shweta Jain} \\
	Indian Institute of Technology,\\ 
        Ropar, India \\
	\texttt{shwetajain@iitrpr.ac.in} \\
        \And
        {\hspace{1mm}Sujit Gujar} \\
	International Institute of Information \\
        Technology (IIIT), Hyderabad, India \\
	\texttt{sujit.gujar@iiit.ac.in} \\
}
\date{}
\begin{document}
\maketitle
\begin{abstract}
One of the widely used peak reduction methods in smart grids is \emph{demand response}, where one analyzes the shift in customers' (agents') usage patterns in response to the signal from the distribution company. Often, these signals are in the form of incentives offered to agents. This work studies the effect of incentives on the probabilities of accepting such offers in a real-world smart grid simulator, PowerTAC. We first show that there exists a function that depicts the probability of an agent reducing its load as a function of the discounts offered to them. We call it reduction probability (RP). RP  function is further parametrized by the rate of reduction (RR), which can differ for each agent. We provide an optimal algorithm, \ouralgo, that outputs the discounts to each agent by maximizing the expected reduction under a budget constraint. When RRs are unknown, we propose a Multi-Armed Bandit (MAB) based online algorithm, namely \ourucb, to learn RRs. Experimentally we show that it exhibits sublinear regret. Finally, we showcase the efficacy of the proposed algorithm in mitigating demand peaks in a real-world smart grid system using the PowerTAC simulator as a test bed.
\end{abstract}

\keywords{Smart Grids, Demand Response (DR), PowerTAC, Learning Customer DR Model, Peak Reduction}

\section{Introduction} \label{sec:intro}
     \emph{Load balancing} is one of the most prevalent problems in energy grids, which occurs when there is a sudden surge of consumption (i.e., during peak hours) and the demand goes beyond the normal working range of supply. The sudden surge in demand leads to multiple issues: (i)  peak demands put an added load on electricity \emph{\ac{GenCo}} to supply additional energy through fast ramping generators to fulfill the energy requirement of the customers (agents). (ii) The grid needs to support such dynamics and peak demand. The ramping up of the generators results in higher costs for \ac{DC}. Typically, daily peak demands are approximately $1.5$ to $2.0$ times higher than the average demand~\cite{eia}. As per one estimation, a $5\%$ lowering of demand during peak hours of California electricity crisis in $2000/2001$ would have resulted in $50\%$ price reduction~\cite{IEA}. Figure~\ref{fig:dr_effect} conveys the same idea where a slight reduction in peak demand can significantly bring down the net electricity costs. Thus, it is paramount to perform load balancing in the grid efficiently.

    A promising technology for load balancing is a \emph{smart grid}. It is an electricity network that supplies energy to agents via two-way digital communication. It allows monitoring, analysis, control, and communication between participants to improve efficiency, transparency, and reliability~\cite{smart_grid_def}. The smart grid technology is equipped with smart meters capable of handling the load in the smart grid by advising the agents to minimize energy usage during heavy load scenarios. The smart grid system can effectively balance the load by incentivizing agents to shift their energy usage to non-peak timeslots by signaling them the updated tariffs, commonly known as \ac{DR}.

    \begin{figure}[!t]
          \centering \includegraphics[width=0.4\linewidth]{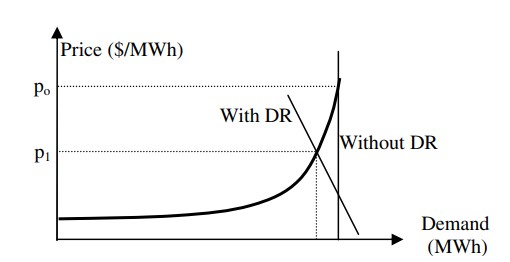}
          \caption{Effect of Demand Response (DR) on Energy Prices [Albadi and El-Saadany, 2007]} 
          \label{fig:dr_effect}
    \end{figure}
        
    \ac{DR} involves \ac{DC} offering the agents voluntarily monetary incentives to optimize their electricity load. There are many approaches, such as auction-based mechanisms~\cite{7279144,7218655} and dynamic pricing~\cite{GOUDARZI2021103073} to achieve \ac{DR}. The major challenge with these approaches is that different agents may respond differently to the given incentives. Thus, to increase agent participation, it becomes crucial to learn their reaction toward these incentives. Learning agents' behavior is challenging due to the uncertainty and randomness that creeps in due to exogenous factors like weather~\cite{shweta2020multiarmed,Li18}. 
    Works like~\cite{shweta2020multiarmed,Li18} consider a very simplistic model -- when \ac{DC} offers to an agent incentive more than what it values, the agent reduces every unit of electricity it consumes with a certain probability independent of the incentive. This probability is termed as \ac{RP}~\cite{JAIN14,shweta2020multiarmed}. \ac{RP}s are learned using \ac{MAB} solutions. There are three primary issues with these approaches. (i) Agents' valuations need to be elicited~\cite{JAIN14,shweta2020multiarmed}, which adds additional communication complexity, (ii) agents reduce all with \ac{RP} else nothing, and (iii) \ac{RP}s do not change with incentives. In the real world, an increase in incentives should lead to an increase in RP. Our work considers the model where the \ac{RP} is a function of incentives offered and not a constant for an agent, and reduction is not binary.
    
    To model RP as a function of incentive, we need to carry out experiments with smart grids. However, any \ac{DR} technique (or such experiments) proposed for a smart grid should also maintain the grid's stability. The only way to validate that the proposed technique would not disrupt the grid operations while achieving \ac{DR} is to test it on real-world smart grids, which is practically impossible. Nevertheless, \ac{PowerTAC}~\cite{KETTER20132621} provides an efficient and very close-to real-world smart grid simulator intending to facilitate smart grid research. We first perform experiments with \ac{PowerTAC} to observe the behavior of different agents for the offered incentives. With rigorous experiments, we propose our model \ourmodel. We observe that the agents respond quickly to the incentives; however, more incentives may not substantially increase reduction guarantees. Different agents may have a different rate of reduction (RR) to incentives that determine how fast RP changes w.r.t. incentives. It also models the consumer valuation for one unit of electricity. A higher RR corresponds to the case where a consumer values the electricity less (for example, a home consumer). In contrast, a lower RR value indicates that the consumer values the electricity higher (for example, an office consumer). 
        
    We propose an optimization problem for the DC to maximize the expected peak reduction within the given budget. We then provide an optimal algorithm, namely \ouralgo, for the case when the \ac{RR}s of the agents are known. When \ac{RR}s are unknown, we employ a standard \ac{MAB}-algorithm, \ourucb, to learn \ac{RR}s. Our experiments with synthetic data exhibit sub-linear regret (the difference between the expected reduction with known RRs and the actual reduction with \ourucb). With this success, we adopt it for PowerTAC set-up and experimentally show that it helps in reducing peak demands substantially and outperforms baselines such as distributing budget equally across all agent segments. In summary, the following are our contributions,
    \begin{itemize}
        \item We propose a novel model (\ourmodel) which mimics smart grid agents' demand response (DR) behavior by analyzing agents' behavior in a close-to real-world smart grid simulator, \ac{PowerTAC}.
        \item We design an offline algorithm to optimally allocate the budget to agents to maximize the expected reduction.
        \item We design an online algorithm based on a linear search method to learn the \ac{RR} values required to calculate optimal allocation in the offline algorithm. We further show that the proposed algorithm exhibits sub-linear regret experimentally.
        \item We evaluate the proposed algorithm on the \ac{PowerTAC} platform -- close to a real-world smart grid system. Experiments showcase the proposed algorithm's efficacy in reducing the demand peaks in the \ac{PowerTAC} environment (14.5\% reduction in peak demands for a sufficient budget).
    \end{itemize}
    

\section{Related Work} \label{sec:replated_work}

Many demand response methods are available in the literature. Some of the popular ones include time-of-day tariff~\cite{Ramchurn11,jain13}, direct load control ~\cite{hsu1991dispatch}, the price elasticity of demand approach (dynamic pricing) ~\cite{chao2012competitive} approaches. These approaches are quite complex for the agents as the price keeps changing. It can lead to agent confusion due to uncertain supply, volatile prices, and lack of information. Due to the complexity involved in these methods, many recent works have focused on providing incentives to the agents, which make them shift their load from peak hours to non-peak hours~\cite{park2015designing,JAIN14}. 
    
In the literature, many techniques for providing incentives primarily focus on the setting where when given an offer (incentive), the consumer can either reduce or choose not to reduce the consumption. For example, \ac{DR} mechanism in ~\cite{JAIN14,shweta2020multiarmed} considered a setting where each consumer was associated with two quantities: (i) valuation per unit of electricity, which represents how much a consumer values the unit of electricity, and (ii) acceptance rate, which denotes the probability of accepting the offer if a consumer is given the incentive more than his/her valuation. The authors then proposed a Multi-Armed Bandit mechanism that elicits the valuation of each consumer and learns the acceptance rate over a period of time. Similar approaches were also considered in ~\cite{Ma16,Ma17,methenitis2019,Li18}. All the above models, in principle, assume that the acceptance rate is independent of the incentives given to the agents. In practice, this assumption does not hold. The acceptance rate ideally should increase with the increase in incentives. To the best of our knowledge, this paper considers the dependency of increased incentives on the acceptance rate for the first time, esp. in MAB-based learning settings. In principle, the paper considers the problem of an optimal allocation of the budget to different types of agents to maximize the overall peak reduction.

Two sets of works aim to maximize the peak reduction under a budget constraint. (i) With a mixed integer linear programming (MILP) approach~\cite{chen2020online},  and (ii) with an efficient algorithm by drawing similarities from the min-knapsack problem~\cite{singh2021designing}. Other than that, there are a few tariff strategies for \ac{PowerTAC} environment which mitigates the demand peaks by publishing tariffs to incentivize customers to shift their non-priority electricity usage to non-peak timeslots~\cite{ijcai2022p23,croc18,Ghosh2019}. However, none of this technique talks about \ac{DR} in detail.
    

\section{Preliminaries and Mathematical Model} \label{sec:prelim}

    In a smart grid system, distributing companies (\ac{DC}) distributes the electricity from \ac{GenCo} to agents (household customers, office spaces,  electric vehicles, etc.) in the tariff market. The customers are equipped with autonomous agents/bots to interact with the grid. Hence, we refer to customers as agents henceforth. Depending on their type, each agent exhibits a certain usage pattern which is a function of a tariff offered by the \ac{DC} for most agents. We consider $N = \{1, 2, \ldots, n\}$ agents available to prepare for \ac{DR} at any given timeslot. 

    A DR model can further incentivize agents, offering $c_i$ to agent $i$, to shift their usages from peak to non-peak timeslot. 
    However, agents may do so stochastically, based on external random events and the offered incentives. For each agent $i$, this stochasticity can be modeled by associating the probability of reducing demand in the desired timeslot as  $i$. 
    We call this probability as \emph{reduction probability} (RP) $p_i(c_i)$. Note that the reduction in electricity consumption at peak slot for agents is \emph{not binary}. For example, an agent with the usage of $10$ KWh and RP ($p_i(c_i)$) of $0.6$ would reduce its usage by $6$ KWh in expectation. The general intuition is that higher incentives lead to a higher probability of accepting the offer, reducing the load in peak hours. Typically the \ac{DC} has a limited budget $b$ to offer discounts. It  aims to achieve the maximum possible peak reduction within the budget.
    
    First, we need to model the agent's RP function $p_i(\cdot)$. We need a simulator that can efficiently model real-world agents' usage patterns and the effects of \ac{DR} on their usage patterns. \ac{PowerTAC}~\cite{KETTER20132621} replicates the crucial elements of a smart grid, including state-of-the-art customer models.  Below, we explain experimental details and observations from the \ac{PowerTAC} experiments that helped us to come up with our novel model of the \ac{RP} function. 
        
    \subsection{Modelling the Reduction Probability (RP) Function Inspired from PowerTAC} \label{ssec:powertac_inspiration}
    
       \noindent\paragraph{\ac{PowerTAC} Set-up:} The \ac{PowerTAC} simulates the smart-grid environments in the form of games that run for a fixed duration. The standard game duration is around $60$ simulation days, which can be modified to play for an even longer duration. The simulation time is discretized into timeslots corresponding to every hour of the day.
         For each game, the \ac{PowerTAC} environment randomly selects the weather of a real-world location, and the agents mold their usage pattern based on the selected weather in the game. During the game, \ac{DC} aims to develop a subscriber base in the tariff market by offering competitive tariffs, which could be \emph{fixed price} (FPT), tiered, \emph{time-of-use} (ToU) or a combination of all. The \ac{DC} also satisfies the energy requirement of their subscriber base by buying power in the wholesale market by participating in day-ahead auctions. There are different types of customers in \ac{PowerTAC}. But, we focus on \ac{PowerTAC}'s consumption agents -- who consume electricity and aim to learn their \ac{RP} function.
        
        \noindent\paragraph{Experimental Set-up:} We perform the following nine sets of experiments to model the \ac{RP} function. We play $10$ different games for $180$ simulation days for each experimental set-up and report the statistics averaged over these $10$ games. For each experiment, we make \ac{DC} publish a tariff at the start of the game and keep that tariff active throughout the game. The initial tariff rates depend on the \ac{DC} electricity purchase cost and may vary from game to game.
        
        \noindent\emph{FPT-Set-up to identify peak slots:} We make \ac{DC} publish an FPT and record each consumption agent's true usage pattern without any external signals from \ac{DC}. Based on the true usage pattern of each agent, we identify the potential peak demand hours in a day. Figure~\ref{fig:fpt_tou} shows the usage pattern of a PowerTAC agent in response to the FPT; in this figure, the hours $7$ and $17$ have the peak usages during the day. The rate value of the FPT is derived by adding a profit margin in the \ac{DC}'s electricity purchase cost. Next, we study the agents' response to different tarrifs. To this, we consider the ToU model, where different prices are proposed at different times. These prices, however, are the same for all agents.
        
        \noindent\emph{ToU-Set-up:} In ToU tariffs, the rate charged for each unit of electricity consumed can vary depending on the time of the day. The ToU tariffs are designed so that the agents get discounts during non-peak hours and no/little discounts during peak hours. The average rate of the tariffs across all timeslots remains the same as the previous FPT-set-up. Essentially, all the ToU tariffs have the exact same area under the curve (AUC) as the FPT. We perform such an experiment for the remaining $8$ sets by offering discounts in each set; we give $x\%$ discount on non-peak timeslots compared to the price in peak timeslots. Here $x \in \{1, 2, 5, 7.5, 10, 15, 20, 30\}$. Figure~\ref{fig:fpt_tou} explains how we move from an FPT (Fig.~\ref{fig:fpt_tou}(a)) to a ToU tariff (Fig.~\ref{fig:fpt_tou}(c)) by offering a certain discount and keeping the AUC the same for all the tariffs. Based on the discount level, the agents modify their usage patterns ((Fig.~\ref{fig:fpt_tou}(b,d)), and we collect the usage data of each agent for each of the sets. 
        
        \begin{figure}[t!]
          \centering
          \includegraphics[width=0.6\linewidth]{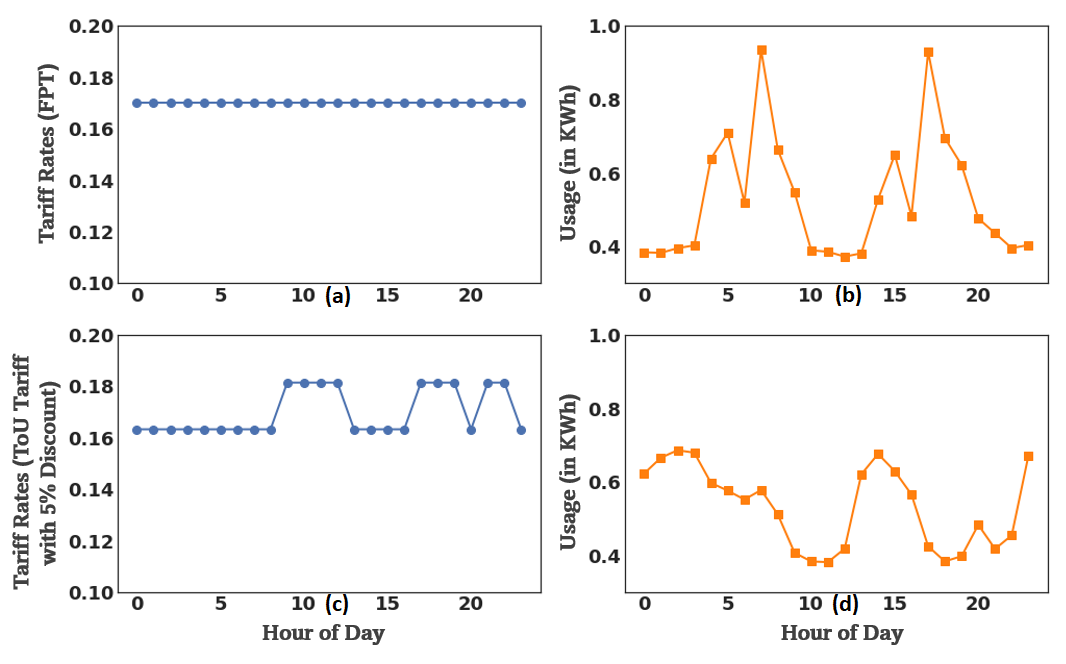}
          \caption{PowerTAC customer's response for FPT and ToU tariffs}
          \label{fig:fpt_tou}
        \end{figure}

        \begin{figure*}[t!]
        \centering
          \begin{subfigure}{0.33\textwidth}
          \includegraphics[width=1.0\textwidth]{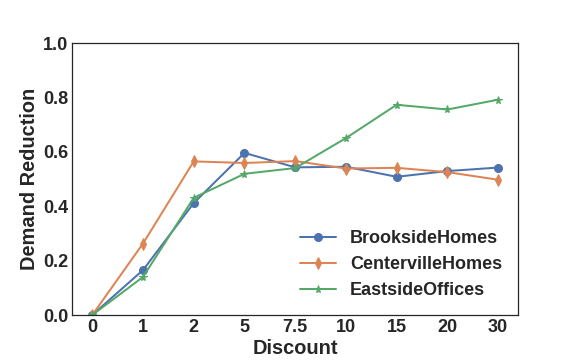}
          \caption{} \label{fig:powertac_agents_DR_peak1}
          \end{subfigure}
          \begin{subfigure}{0.33\textwidth}
          \includegraphics[width=1.0\textwidth]{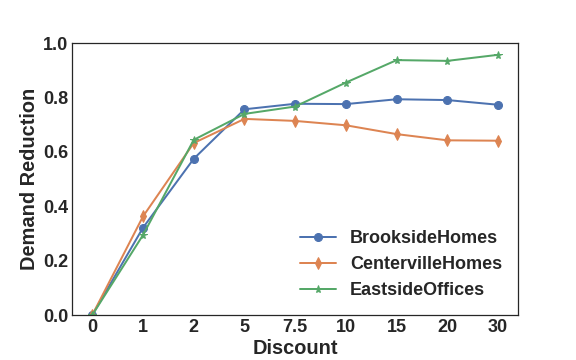}
          \caption{} \label{fig:powertac_agents_DR_peak2}
          \end{subfigure}
          \begin{subfigure}{0.33\textwidth}
          \includegraphics[width=1.0\textwidth]{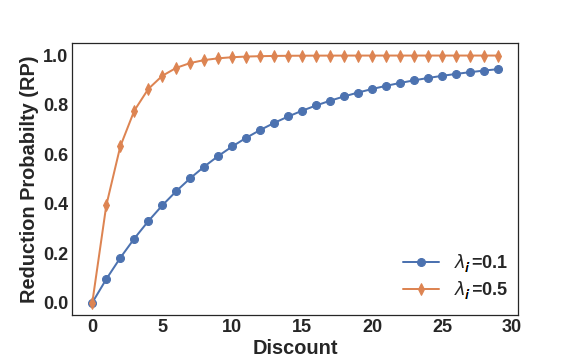}
          \caption{}\label{fig:prob_DR}
          \end{subfigure}
          \caption{DR probability function of PowerTAC customers for (a) the highest peak, (b) the $2$nd highest peak, and (c) exponential probability function for \ourmodel}
        \end{figure*}

        To analyze the effects of various discounts on agents' usage patterns, we pick the top two peak hours in the day for each agent. Then, we calculate the difference between the electricity usage during FPT-Set-up and electricity usage during discounted ToU-Set-up for both peak slots. We do this for all eight sets numbered from $2$ to $9$. We can view the discounted tariffs as a \ac{DR} signal for the agents to shift their non-priority usages from peak to non-peak timeslots. Below, we show the observations of the \ac{DR} experiments for a few selected agents.
        
        Figure~\ref{fig:powertac_agents_DR_peak1} and Figure~\ref{fig:powertac_agents_DR_peak2} show the \ac{DR} behavior of three \ac{PowerTAC} agents BrooksideHomes, CentervilleHomes and EastsideOffices for their top two peak demand hours (re-scaled to visualize peak reduction as a probability function). The first two agents are household customers, whereas the last agent is an office customer. Analysing the plots gives a crucial insight into the agents' behavior. The agents reduce their usage by a great extent for the initial values of discount $(1\%$, $2\%$ and $5\%)$ but cannot reduce their usage further even when offered a much higher discount; secondly, different agents follow the different rate of reduction. 
        
        Based on the \ac{PowerTAC} experiments, we conclude that the reduction probability function can be modeled by an exponential probability function and is given as: 
        \begin{equation}
            p_i(c_i) = 1 - e^{-\lambda_i c_i}, \forall{i} \in N \label{eq:red_prob}
        \end{equation}
        Here, $c_i$ is a discount (or incentive) given to agent $i$, and $\lambda_i$ is its reduction rate (\ac{RR}). The proposed function depends upon the choice of $\lambda_i$; the higher value of $\lambda_i$ generates a steeply increasing curve (as shown with $\lambda_i=0.5$), while the lower $\lambda_i$ value makes the curve increase slowly with each discount (as shown with $\lambda_i=0.1$) as shown in Figure~\ref{fig:prob_DR}. Let $c=(c_1,c_2,\ldots,c_n)$ and $\lambda=(\lambda_1,\lambda_2,\ldots,\lambda_n)$ be vector of offered incentives and \ac{RR}s. 
        
    \subsection{\ourmodel: The Optimization Problem} \label{ssec:optimization_problem}
        We assume that all the agents have the same electricity consumption in peak slots\footnote{Agents consuming different amounts can be trivially modeled by duplicating agents}.
        The aim is to maximize the expected reduction under a budget constraint. This leads to the following optimization problem:
        \begin{align}
        \label{eq:opt-final}
         \textbf{max}_{c_i} \: \Large \sum_{i=1}^{n} (1 - e^{-\lambda_i c_i})\; s.t. \sum_{i=1}^n c_i \le b
        \end{align}    
        
        Suppose the \ac{RR} ($\lambda$) values are known. In that case, we present an optimal algorithm \ouralgo\ to efficiently distribute the budget $b$ among the agents to maximize the expected sum of peak reduction (Section~\ref{ssec:offline}). When \ac{RR}s are unknown, we provide \ourucb\ algorithm that estimates it (Section~\ref{ssec:online}). The algorithm is motivated by multi-armed bandit literature~\cite{JAIN14,shweta2020multiarmed} and uses the linear search over the possible range of values of \ac{RR}. 
    
    \section{Proposed Algorithms for \ourmodel} \label{sec:dr_algo}
    
        This section proposes a novel algorithm to solve \ourmodel. We discuss two settings: (i) perfect information that assumes the knowledge of \ac{RR}, and (ii) imperfect information  where \ac{RR} values need to be learned over time.
        
    \subsection{Perfect Information Setting: Known RR} \label{ssec:offline}
    
        
        \begin{algorithm}[t!]
        \DontPrintSemicolon          
          \KwInput{Budget $b$, $n$, \ac{RR} vector $\lambda$}
          \KwOutput{Final Allocation vector $c$}

          $cost \leftarrow 0, c \leftarrow 0_n$; \tcp*{initialization}
          \While{$cost \le b$}
          {
                $d \leftarrow 1$ \tcp*{iterator}
                $l \leftarrow 1$ \tcp*{index of agent with largest jump}
                \While{$d \le n$}
                {
                    $\Delta_{d}^{c_d+1} \leftarrow (1 - e^{-\lambda_d(c_d+1)}) - (1 - e^{-\lambda_dc_d})$ \\
                    $\Delta_{l}^{c_l+1} \leftarrow (1 - e^{-\lambda_l(c_l+1)}) - (1 - e^{-\lambda_lc_l})$ \\

                    \If{$\Delta_{d}^{c_d+1} > \Delta_{l}^{c_l+1}$}
                        {$l = d$}
                    $d = d + 1$
                }
                
            $c_l = c_l + 1$ and $cost = cost + 1$ 
          }
          \textbf{return} \textrm{$c$} \tcp*{final allocation}
        
        \caption{\ouralgo\ Algorithm} \label{alg:offline}
        \end{algorithm}
        
        \ouralgo\  (Algorithm~\ref{alg:offline}) distributes one unit of budget to an appropriate agent in each iteration until the entire budget is exhausted. To decide the appropriate agent for the current iteration, we calculate \emph{jump} ($\Delta$) values for all the agents. We define $\Delta_{i}^{j+1}$ value for each agent as the change in \ac{RP} for a unit change in discount. For example, if an agent $i$ has \ac{RP} of $(1 - e^{-\lambda_i(j+1)})$ for discount $c_i = (j+1)$ and \ac{RP} of $(1 - e^{-\lambda_ij})$ for discount $c_i = j$, then the jump $\Delta_{i}^{j+1}$ is the difference between these two probabilities. The algorithm finds an agent $l$ having the maximum such jump for the current unit of reduction (denoted by $\Delta_{l}^{c_l+1}$ for agent $l$) and allocates the current unit discount to agent $l$--\emph{Maximum Jump Selection} (MJS). 
        Finally, the algorithm returns the allocation, which is the optimal distribution of the initial fixed budget, as shown in the below theorem. 
        \begin{theorem}
        \ouralgo\ is optimal.
        \end{theorem}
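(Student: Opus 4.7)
The plan is to prove optimality by exploiting the concavity of the per-agent reward functions, so that the problem reduces to selecting the top $b$ values from a multiset of marginal gains; greedy achieves this because of diminishing returns within each agent.

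First I would rewrite the objective as a telescoping sum of the marginal jumps. For each agent $i$ and each integer $k\ge 1$, set $\Delta_i^k = (1-e^{-\lambda_i k}) - (1-e^{-\lambda_i(k-1)}) = e^{-\lambda_i(k-1)}(1-e^{-\lambda_i})$. Then for any integer allocation $c=(c_1,\dots,c_n)$ with $\sum c_i \le b$,
\begin{equation*}
\sum_{i=1}^n \bigl(1-e^{-\lambda_i c_i}\bigr) = \sum_{i=1}^n \sum_{k=1}^{c_i} \Delta_i^k.
\end{equation*}
Thus the total reward is a sum of at most $b$ marginal jumps chosen from the multiset $\mathcal{D}=\{\Delta_i^k : i\in[n],\ k\ge 1\}$, subject to the constraint that if $\Delta_i^k$ is selected then $\Delta_i^{k'}$ is selected for every $k'<k$ (a \emph{prefix} constraint on each agent).

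Next I would observe that since $\lambda_i>0$, each sequence $(\Delta_i^k)_{k\ge 1}$ is strictly decreasing in $k$ (this is the diminishing-returns property of $1-e^{-\lambda_i c}$). Let $T\subseteq \mathcal{D}$ be the set of the $b$ largest elements of $\mathcal{D}$ (breaking ties arbitrarily). Because within each agent $i$ the values $\Delta_i^1>\Delta_i^2>\cdots$ are strictly decreasing, $T$ is automatically prefix-closed: whenever $\Delta_i^k\in T$, any $\Delta_i^{k'}$ with $k'<k$ is even larger and must also lie in $T$. Hence $T$ corresponds to a feasible integer allocation, and since its total value upper-bounds the sum of any $b$ elements of $\mathcal{D}$, it matches the value of the continuous relaxation restricted to integer allocations, and therefore upper-bounds the value of every feasible allocation.

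It then remains to show that \ouralgo{} produces exactly this allocation. At iteration $t$, the algorithm holds an allocation $c^{(t-1)}$ and picks the agent $l$ maximizing $\Delta_l^{c_l^{(t-1)}+1}$ over the ``frontier'' $\{\Delta_i^{c_i^{(t-1)}+1} : i\in[n]\}$. By the strict decrease of $(\Delta_i^k)_k$, for each agent $i$ the largest element of $\mathcal{D}$ not yet picked by greedy is exactly $\Delta_i^{c_i^{(t-1)}+1}$; hence the frontier contains the globally largest unpicked element of $\mathcal{D}$. A straightforward induction on $t$ then shows that after $b$ iterations greedy has picked precisely the $b$ largest elements of $\mathcal{D}$, whose sum equals the optimum value established above.

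The main obstacle, which is small but worth writing cleanly, is the last induction: one must argue that the argmax over the frontier truly coincides with the argmax over the entire remaining multiset. This is immediate from the within-agent monotonicity, but ties (if any two jumps are equal, possibly across different agents) need a brief tie-breaking comment so that one can say ``the top $b$ gains'' unambiguously. With that in hand, the matching of greedy's output to the top-$b$ set is mechanical, and optimality follows.
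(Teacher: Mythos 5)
Your proposal follows essentially the same route as the paper's proof: decompose the objective into a telescoping sum of marginal jumps $\Delta_i^k$, use the within-agent monotonicity $\Delta_i^k \ge \Delta_i^{k+1}$ (the paper's lemma) to conclude that the optimum is the sum of the $b$ largest jumps, and observe that the greedy maximum-jump selection attains exactly this set. Your write-up is in fact more complete than the paper's, since you explicitly verify that the top-$b$ set is prefix-closed (hence feasible) and that the frontier argmax coincides with the global argmax over the remaining multiset -- two steps the paper leaves implicit.
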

        \begin{proof}
        For any discount vector $c$, the objective function in Equation \ref{eq:opt-final} can be written as a sum of jumps $\Delta_i^j$ which denote the additional increase in reduction probability of consumer $i$ when offered $j$ units of discount compared to $j-1$. i.e.
        \allowdisplaybreaks
        \begin{align*}
            &\max_{c}\sum_{i=1}^n 1-e^{-\lambda_ic_i}\\ &= \max_{c}\sum_{i=1}^n\left(\sum_{j=1}^{c_i} (1-e^{-\lambda_ij}) - (1-e^{-\lambda_i(j-1)})\right)\\ &= \sum_{j=1}^{c_i} \Delta_i^j \max_{c}\sum_{i=1}^n \sum_{j=1}^{c_i}\Delta_i^j\:\:\:\: s.t.\:\: \sum_{i=1}^n c_i \le b
        \end{align*}
        Thus at the optimal solution, one unit is allocated to $b$ highest jumps and $0$ to other jumps. We now need to prove that the earlier jump is higher than the latter, i.e., $\Delta_i^l \ge \Delta_i^j\ \forall l < j$. The below lemma proves this for any agent $i$.
        \end{proof}
        
        \begin{lemma}
        For each $i$, we have $\Delta_i^j \ge \Delta_i^{j+1}$ with $\lambda_i \ge 0$
        \end{lemma}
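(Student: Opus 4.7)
The plan is to reduce the claim to a clean algebraic inequality by writing the jumps in closed form. By definition,
\[
\Delta_i^j \;=\; \bigl(1 - e^{-\lambda_i j}\bigr) - \bigl(1 - e^{-\lambda_i(j-1)}\bigr) \;=\; e^{-\lambda_i(j-1)} - e^{-\lambda_i j},
\]
and an identical computation gives $\Delta_i^{j+1} = e^{-\lambda_i j} - e^{-\lambda_i(j+1)}$. Factoring out the common term $\bigl(1 - e^{-\lambda_i}\bigr)$ yields
\[
\Delta_i^j = e^{-\lambda_i(j-1)}\bigl(1 - e^{-\lambda_i}\bigr), \qquad \Delta_i^{j+1} = e^{-\lambda_i j}\bigl(1 - e^{-\lambda_i}\bigr).
\]

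Now I would finish in two short steps. First, observe that since $\lambda_i \ge 0$, the factor $1 - e^{-\lambda_i}$ is non-negative. Second, compare the remaining exponentials: $e^{-\lambda_i(j-1)} \ge e^{-\lambda_i j}$ is equivalent to $\lambda_i \ge 0$, which holds by hypothesis. Multiplying these two non-negative quantities gives $\Delta_i^j \ge \Delta_i^{j+1}$ directly.

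A structural one-line alternative, which I would mention as an \emph{aside} rather than the main proof, is that $f(x) = 1 - e^{-\lambda_i x}$ is concave on $[0,\infty)$ (its second derivative $-\lambda_i^2 e^{-\lambda_i x}$ is non-positive), so its unit forward differences $\Delta_i^{j+1} = f(j+1) - f(j)$ are non-increasing in $j$. This gives the same conclusion conceptually: the diminishing-returns behavior of the RP function is exactly what the offline greedy \ouralgo{} exploits in the preceding theorem.

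There is no real obstacle here — the statement is just the discrete analogue of concavity of $1 - e^{-\lambda_i x}$ — so the main thing to be careful about is bookkeeping: making sure the factorization of $\Delta_i^j$ is presented before invoking monotonicity, and explicitly flagging the hypothesis $\lambda_i \ge 0$ at the step where it is used, since it is needed both for the sign of $1 - e^{-\lambda_i}$ and for the monotonicity of $j \mapsto e^{-\lambda_i(j-1)}$.
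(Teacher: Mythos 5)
Your proof is correct and takes essentially the same route as the paper: a direct algebraic comparison of the closed-form jumps $\Delta_i^j = e^{-\lambda_i(j-1)} - e^{-\lambda_i j}$. The only cosmetic difference is the factorization — you pull out the common factor $(1-e^{-\lambda_i})$ and compare $e^{-\lambda_i(j-1)} \ge e^{-\lambda_i j}$, whereas the paper pulls out $e^{-j\lambda_i}$ and compares $e^{\lambda_i}-1 \ge 1-e^{-\lambda_i}$; your residual inequality is arguably the more immediate of the two.
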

        \begin{proof}
        We have the following:
        \begin{align*}
            &e^{-j\lambda_i}(e^{\lambda_i} -1 ) \ge e^{-j\lambda_i}(1 - e^{-\lambda_i})\\
            \Rightarrow &e^{-\lambda_i(j-1)} - e^{-\lambda_ij} \ge e^{-j\lambda_i} - e^{-\lambda_i(j+1)}
        \end{align*}
        From the last equation, we have $\Delta_i^j \ge \Delta_i^{j+1}$.
        \end{proof}

        
    \noindent  Note that one can use KKT conditions and derive a set of linear equations to determine an optimal distribution of $b$. Our proposed algorithm is simple, determines an optimal solution in linear time, and has a time complexity of $O(nb)$. 
    
    \subsection{Imperfect Information Setting: Unknown RR} \label{ssec:online}
        As \ac{RR} of the agents are unknown in this setting, we estimate them based on the history of the agents, which consists of the agents' response during the past timeslots. For each agent, we store its historical behavior by keeping track of the offered history and success history; we estimate $\hat{\lambda}\mbox{ and }\hat{\lambda}^+$ through a routine $LinearSearch(\cdot)$. 

        \begin{algorithm}[t!]
        \DontPrintSemicolon
          
          \KwInput{Budget $b$, $n$, Batch Size $bS$,  $T$}
          \KwOutput{Allocation $\{c_t\}_{t=1}^T$}

          Initialize $ \hat\lambda ,\hat\lambda^+$ randomly\tcp*{$n$-dimensional vectors}
          
          Initialize $offeredInst$, $successInst$, $offeredHist$ and $successHist$ to 0          \tcp*{2D matrices of size $n \times b$}
          
          $t \leftarrow 0$

          \While{$t < T$}
          {
            $\{c_{t'}\}_{t'=t}^{t+bs} = \ouralgo(b, n, \hat\lambda^+)$ \\
            
            \For{$i = 1\rightarrow n$}
            {
                \If{$c_t(i) \ne 0$}
                {
                    $offeredInst(i, c_t(i))$ += $bS$
                    
                    $successInst(i, c_t(i))$ += \# Successes for agent $i$
                }
            }
            Update $Hist=\{Hist, offeredInst,successInst\}$ \\
            Clear $offeredInst,successInst$\\
            $t \leftarrow t+bS$ \\
            $[\hat\lambda, \hat\lambda^+] \leftarrow LinearSearch(Hist, n, b, t)$ \\
            
          }
          \textbf{return} \textrm{$\{c_t\}_{t=1}^T$}
        
        \caption{\ourucb\ Algorithm} \label{alg:online}
        \end{algorithm}
        
        \noindent\paragraph{\ourucb} We start by initializing $\hat\lambda$ and its UCB component $\hat\lambda^+$ and then estimating $\hat{p}_i(c_i)$ for each agent $i$ and for each $c_i$ using $Linearsearch(\cdot)$ at each timeslot. 
       \begin{figure*}[t!]
          \centering
          \includegraphics[width=\textwidth]{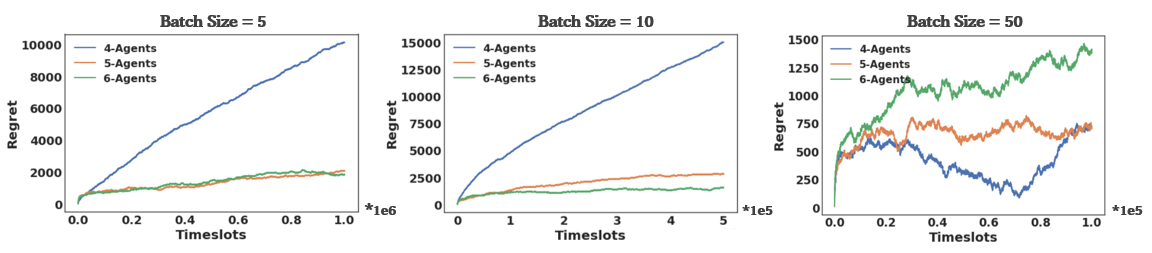}
          \caption{Comparing \ourucb's regret over $25$ iterations with varying batch sizes [budget=5, T=5M]}
          \label{fig:batch}
        \end{figure*}

        \begin{figure}
          \centering
          \includegraphics[width=0.7\linewidth]{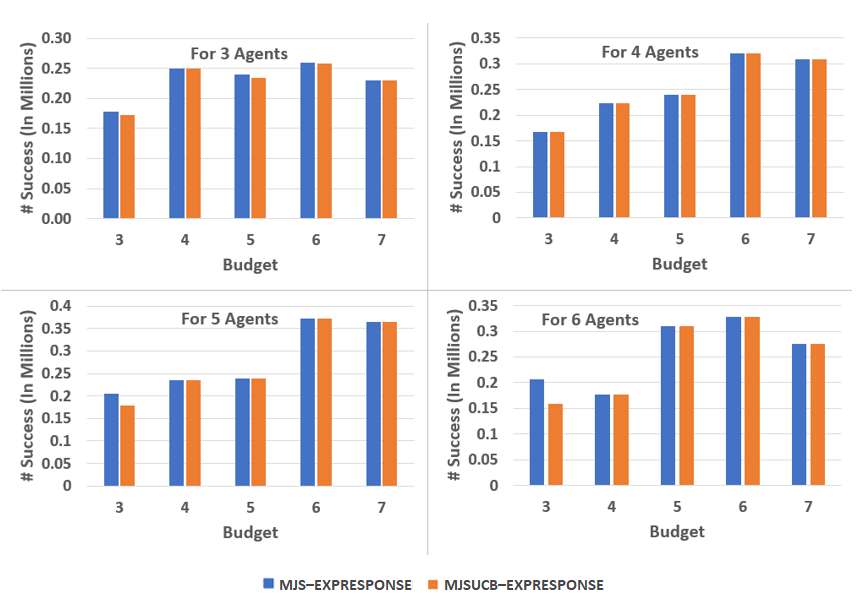}
          \caption{Comparing \ourucb\ and \ouralgo\ with varying budget and number of agents [Iterations = 25, T=5M]}
          \label{fig:agents_budget}
         \end{figure}
    
        \noindent\paragraph{LinearSearch():} Estimating \ac{RR}s with the offered history and success history,  we calculate $\hat{p}_i(c_i) = \frac{SuccHist(i,c_i)}{OfferredHist(i,c_i)}$, for each $c_i$ offered to the agent $i$. $\hat{p}_i(c_i)$ is then used to calculate candidate values of \ac{RR} using Equation~\ref{eq:red_prob}. Based on the candidate \ac{RR} values, we determine $\hat{\lambda}_i$ that minimizes the squared error loss between the historical probabilities and the probabilities calculated based on the Equation~\ref{eq:red_prob} i.e., $\hat{\lambda}_i \in argmin_l \sum_{c_i\in [b]} \left(\hat{p}_i(c_i) - (1 - e^{-l c_i})\right)^2$, for each of the discount value. The \ac{RR} value that achieves the least squared error loss is returned as the optimal \ac{RR} after the current timeslot. We follow the same method for each of the agents. Algorithm~\ref{alg:online} discusses our proposed \ourucb\ method in more detail, which takes budget $b$, $n$, batch Size $bS$, and $T$ as inputs and returns $\hat\lambda$s. Here,  $\hat\lambda,\hat\lambda^+$ denote estimated $\lambda$ and its UCB version, respectively. 
        
    \subsection{Experimental Evaluation of \ourucb}
    To check whether the algorithm converges to the true \ac{RR}, we conduct extensive analysis on a simplified version of a smart grid. Here, we discuss the experimental set-up to observe the regret of the proposed \ourucb. Regret is the difference between total reduction with known RRs and total reduction with unknown RRs. In both experiments, we repeat the experiment $25$ times, each instance having independently chosen $\lambda$. We report different statistics averaged over 25 iterations. 
    
    \noindent\paragraph{Exp1-- Effect of batch size:} In this experiment, we keep the budget $b$ and $T$ constant, and vary batch sizes. This experiment shows the change in regret behavior as we change the batch sizes from low to high. For each of the batch sizes, we compare the regret for a different number of agents. 
    Figure~\ref{fig:batch} compares average regret of \ourucb\ over $25$ iterations of varying true \ac{RR} values of agents, with varying batch size, and keeping budget $b = 5$ and $T = 5M$ constant. The figure shows three subplots with batch sizes of $5$, $10$, and $50$, respectively. Each subplot compares the regret values for $4$, $5$, and $6$ agents, respectively, and shows \emph{sub-linear} regret in the case of $5$ and $6$ agents for three different batch sizes. With an increased batch size to $50$, even the case with $4$ agents converges to sub-linear regret within a few timeslots. 
    
    \noindent\paragraph{Exp2-- Effect of budget and relation w.r.t. $T$:} The second set of experiments is similar to the Exp1, except in this set, we vary the budget and number of agents while keeping the number of iterations and $T$ constant. The second set of experiments compares the peak reduction achieved by \ourucb\ and the optimal peak reduction when we know all the agents' true \ac{RR}. MAB literature refers to it as \emph{regret}. It also shows how the success rates change when we increase the initial budget keeping the number of agents the same. Additionally, the experiment helps us observe the peak reduction with varying budgets across different numbers of agents. Figure~\ref{fig:agents_budget} compares \ourucb\ peak reduction achieved when we know the true \ac{RR} and peak reduction achieved by \ourucb\ across varying budget and the varying number of agents over $25$ iterations of varying true \ac{RR} values, here we keep $iterations\: T = 5M$. As shown in the figure, the total peak reduction achieved by \ourucb\ is nearly the same as the reduction we get when we know all agents' true \ac{RR} and allocate the budget optimally. Thus, we analytically conclude that \ourucb\ achieves a sub-linear regret and its peak reduction success rates are approximately the same as the optimal peak reduction success rates. We next show the performance of \ourucb\ in \ac{PowerTAC}.
    
    \section{\ourucb\ in PowerTAC} \label{sec:ucb_powertac}

    \noindent\paragraph{Modelling the customer groups}: In the algorithm, we assume all the agents are of the same type (meaning they use the same amount of electricity). However, in \ac{PowerTAC}, agents are of varied types; for example, some belong to the household agents class, some belong to the office agents class, and some are village agents. Even for office agents, some offices use a high amount of electricity compared to others. Thus, we begin by grouping the agents based on their electricity usage, and create $4$ such groups, namely, $G1$, $G2$, $G3$ and $G4$, as shown in Table~\ref{tab:customer_groups}. We consider groups due to the limitations of \ac{PowerTAC}, where we cannot publish individual customer-specific tariffs. We can only publish tariffs for customer groups having similar usage ranges. However, our proposed model and algorithms do not rely on any assumption of the existence of such groups and treat each consumer as a separate user (in Sections \ref{sec:prelim} and \ref{sec:dr_algo}). We leave out the remaining \ac{PowerTAC} agents as they do not use a considerable amount of electricity in the tariff market.
    
    \noindent\paragraph{Designing the tariffs for each group}: For each group $Gi$, we publish ToU tariff $ToUi$ such that agents from $Gi$ subscribe to tariff $ToUi$, and no other group of agents subscribe to that tariff. To achieve this, we combine ToU tariffs with \emph{tier} tariffs as follows. In \ac{PowerTAC}, tier tariffs specify rate values and upper bounds on electricity usage below which the specified rates are applicable. However, if the usage goes beyond that particular bound, the agent has to pay the rate values associated with the next higher bound. As we have segregated the agents based on their usage range, for any targeted group, we offer standard ToU rate values for its particular usage range and high rates for the remaining ranges of electricity usage. Thus, a group of agents naturally like the tariff designed for their group as the other tariffs are way costlier for their usage pattern. At any moment in the PowerTAC game, we keep all four $ToU$ tariffs active (one for each group); these tariffs keep getting updated based on the \ac{DR} signals from \ac{DC}.
    \begin{table} 
        \small
        \centering
        \caption{Customer groups detail}
        \begin{tabular}{ | c | c | c| c | } 
        \hline
        Group & Customers & Type & \%Usage in Tariff Market  \\
        \hline
        \hline
        $G1$ &BrooksideHomes \& CentervilleHomes & Household & $50\%$ \\ 
        \hline
        $G2$ & DowntownOffices \& EastsideOffices & Small Offices & $25\%$ \\
        \hline
        $G3$ & HextraChemical & Mid-level Offices & $10\%$ to $12\%$  \\
        \hline
        $G4$ & MedicalCenter-1 & High-level Offices & $10\%$ to $12\%$  \\
        \hline
        \end{tabular}
        \label{tab:customer_groups}
    \end{table}
    
    \noindent\paragraph{Adapting \ourucb\ in \ac{PowerTAC}}: While proposing our model, we assume that agents are identical and have the same usage capability. Thus, maximizing the sum of probability would also result in maximizing reduction. However, for general smart grid settings such as \ac{PowerTAC}, we modify our model by giving weightage to agents based on their usage percentage (market cap). Higher weightage is given to agents that can reduce the larger amount of energy. We modify \ouralgo\ to introduce weights proportional to groups' contribution to electricity usage for each group. $weights = \{4,\:2,\:1,\:1\}$ to groups $\{G1,\:G2,\:G3,\:G4\}$, respectively in our experiments. We still use $c=\ouralgo(\cdot)$ (Line 5, Algorithm~\ref{alg:online}) to find the group that can fetch the highest increase in the probabilities as shown in Algorithm~\ref{alg:offline}.
    
    While allocating discounts to the groups, instead of allocating a $1$ unit of budget to each group, we weigh the unit with the group's weight. For example, if $G1$ gets selected for the discount, we assign a $4$ unit discount instead of $1$. We call this way of allocation as \wouralgo. It may help to assign weights to the groups as assigning weights will allocate discounts proportional to their peak reduction capacity. For instance, $10\%$ reduction in $G1$ would reduce more peak demand than $10\%$ reduction in $G4$. 
    
    \noindent\paragraph{Creating baseline:} To compare the performance, we consider the baseline of uniformly allocating the budget to all the groups. This leads to publishing group-specific tariffs with equal discounts. We record the peak reduction efficiency and reduction in capacity transactions from the baseline strategy. We then use the recorded information as a benchmark to evaluate \ourucb\ performance. Furthermore, we compare the efficacy against the strategy when we do not provide groups with any \ac{DR} signals.
    
    \noindent\paragraph{Evaluation metrics:} Finally, we evaluate \ourucb's performance on two metrics, (i) \ourucb's peak demand reduction capability, which indicates how much percentage of peak demand reduction \ourucb\ achieved compared to the benchmark strategies, and (ii) the reduction in \emph{capacity transaction} penalties that suggest how effectively \ourucb\ can mitigate such penalties compare to the benchmark strategies.

    \noindent\textbf{Capacity transactions} In \ac{PowerTAC}, capacity transactions are the penalties incurred by the \ac{DC} if the agents subscribed to their portfolio contribute to the peak demand scenarios. These huge penalties are a way to penalize the \ac{DC} for letting the agents create demand peaks. Thus, as opposed to the previous section where we analytically show \ourucb\ exhibits a sub-linear regret, here in PowerTAC experiments, we aim to reduce capacity transaction penalties of \ac{DC} using \ourucb.
  
\subsection{Experiments and Discussion}
    \noindent\paragraph{Experimental set-up:} We perform multiple experiments with different initial budgets. We play $8$ games in each set with approximately $28$ simulation weeks (total $210$ weeks). For each set, we start the experiments by randomly initializing \ac{RR} values for each group and calculate the budget allocation based on \wouralgo\ as well as \ouralgo\ (line 5 in \ourucb), called as \ourucb-W and \ourucb-UW, respectively. 
    
    As explained in Section~\ref{sec:ucb_powertac}, for each of the $4$ groups, we have four ToU tariffs. We keep the same tariffs active for $3$ simulation days and invoke the \ourucb\ at the end of the $3$rd day. Based on the success probabilities, we update the $offeredHist$ and $successHist$, and calculate the next set of $\hat\lambda$ and $\hat\lambda^+$ values for each group. Using the new $\hat\lambda^+$, we calculate the next demand allocation and publish the new tariffs as explained earlier. While publishing new tariffs, we revoke the previous ones; thus, only $4$ tariffs are active at any time in the game. This $3$ days process constitutes a single learning iteration ($t$). To calculate the success probability of each tariff, we played $10$ offline games without any discount to any group and noted down the top two peak timeslots. Let $x_{i,1} $ and $x_{i,2}$ denote per group usage during those peak timeslots. Then, we compute the success probability as $p_{i,1} = (1 - y_{i,1} / x_{i,1})$ and $p_{i,2} = (1 - y_{i,2} / x_{i,2})$, with $y_{i,1}$ and $y_{i,2}$ denoting group 1 and 2 usage respectively. $p_i$ is then set as $p_i = \frac{p_{i,1}+p_{i,2}}{2}$. We perform $2$ sets of experiments with  $b=15\%$ and $b=7.5\%$. We define a scalar value that gets multiplied by the discounts to generate fractional discounts. 
    
    \noindent\paragraph{Observations and Discussion:} Table~\ref{tab:compare_reduction_cap_trans} shows the cumulative peak usages under \ourucb\ and bench-marked (baseline) method for the top $2$ peaks of groups $G1$ to $G4$. As shown in the table, for the overall $210$ simulation weeks of training in \ac{PowerTAC}, \ourucb\ cumulative peak usage for peak1 is similar to the baseline method for both weighted and unweighted allocations, while slightly worse than the baseline for the peak2. The observation is consistent for the budget values $15\%$ and $7.5\%$. However, if we focus on only the last $10$ weeks of training, \ourucb's peak usage reduction capabilities are visible. Both weighted and unweighted allocations achieve cumulative peak reduction close to $14.5\%$ concerning \emph{No Discount} peak usages for peak1 and $b = 15\%$, which is almost $5$ times better than the baseline while maintaining similar performance as the baseline for peak2. Similarly, \ourucb\ achieves significant improvement for $b = 7.5\%$ too for peak1 by reducing the peaks $3$ to $4$ times better than baseline. Furthermore, as shown in Table~\ref{tab:compare_reduction_cap_trans}, capacity transaction penalties in the last $10$ weeks are significantly lower than the \emph{No Discount} and baseline. Due to DR signals, agents sometimes shift some of the demand from peak1 to peak2 or cannot reduce any demand from peak2. However, if the overall system's performance is observed with the help of capacity transaction penalties in PowerTAC experiments, the penalties are significantly lower than the baseline, reinforcing the efficacy of \ourucb\ in the \ac{PowerTAC} environment.   
    
    \begin{table} [t!] 
        \small
        \centering
        \caption{Peak usage comparison (usage in MWh) and Capacity transaction comparison (avg. penalty)}
        \begin{tabular}{ |c|c|c|c|c|c|c|c|}
        \hline
        \multirow{2}{*}{Method} & \multicolumn{2}{|c|}{$b = 15$} & \multicolumn{2}{|c|}{$b = 7.5$} & \multirow{2}{*}{$b = 15$} & \multirow{2}{*}{$b = 7.5$}\\
        \cline{2-5}
        & P1 & P2 & P1 & P2 & & \\
        \hline
        \hline
        No Discount & $70.2$ & $69.7$ & $70.2$ & $69.7$ & $249355$ & $249355$\\ 
        \hline
        Baseline & $68.1$ & $67.4$ & $67.8$ & $67.7$ & $233768$ & $227070$\\
        \hline
        \hline
        \multicolumn{7}{|c|}{Average Over All $210$ Weeks of Training} \\
        \hline
        \ourucb-W & $68.4$ & $67.8$ & $68.8$ & $69.5$ & $233643$ & $228478$ \\
        \hline
        \ourucb-UW & $68.3$ & $68.1$ & $68.1$ & $68.6$ & $233248$ & $229467$ \\
        \hline
        \hline
        \multicolumn{7}{|c|}{Average Over Last $10$ Weeks of Training} \\
        \hline
        \ourucb-W & $60.2$ & $69.8$ & $64.1$ & $70.9$ & $226374$ & $228351$ \\ 
        \hline
        \ourucb-UW & $60.0$ & $67.6$ & $61.4$ & $69.0$ & $225775$ & $228276$ \\ 
        \hline
        \end{tabular}
        \label{tab:compare_reduction_cap_trans}
    \end{table}
        
\section{Conclusion}
    The paper proposed a novel DR model where the user's behavior depends on how much incentives are given to the users. Using the experiments on the \ac{PowerTAC} real-world smart grid simulator, we first showed that agents' probability of accepting the offer increases exponentially with the incentives given. Further, each group of agents follows a different rate of reduction (RR). Under the known RR setting, we proposed \ouralgo\, which leads to an optimal allocation of a given budget to the agents, which maximizes the peak reduction. When RRs are unknown, we proposed \ourucb\ that achieves sublinear regret on the simulated data. We demonstrated that \ourucb\ is able to achieve a significant reduction in peak demands and capacity transactions just within 200 weeks of simulation on \ac{PowerTAC} simulator.

\bibliographystyle{unsrt}

\end{document}